\newtheorem{thm}{Theorem}
\newtheorem{cor}{Corollary}
\newtheorem{lem}{Lemma}
\newtheorem{asm}{Assumption}
\begin{document}

\begin{titlepage}

\title{Stochastic Matrix Factorization}
\author{Christopher P. Adams, \\
Federal Trade Commission}
\thanks{The views expressed in this article are those of the author
and do not necessarily reflect those of the Federal Trade
Commission.  I'm grateful for continued discussions about this problem with Devesh Raval and Nathan Wilson.  Thanks to Matthew Chesnes for his assistance with the dissertation abstract data. All errors are my own.}

\email{cadams@ftc.gov}
\maketitle

\date{} 

\begin{abstract}
\footnotesize{This paper considers a restriction to non-negative matrix factorization in which at least one matrix factor is stochastic.  That is, the elements of the matrix factors are non-negative and the columns of one matrix factor sum to 1.  This restriction includes topic models, a popular method for analyzing unstructured data.  It also includes a method for storing and finding pictures.  The paper presents necessary and sufficient conditions on the observed data such that the factorization is unique.  In addition, the paper characterizes natural bounds on the parameters for any observed data and presents a consistent least squares estimator.  The results are illustrated using a topic model analysis of PhD abstracts in economics and the problem of storing and retrieving a set of pictures of faces.}
\end{abstract}

\end{titlepage}

\section{Introduction}

Matrix factorization is an important tool across applied statistics including psychometrics, econometrics, biostatistics and machine learning.  Despite its popularity, matrix factorization suffers from two fundamental problems.  First, in general the matrix factorization does not have a unique solution.  There may be many unrelated matrices consistent with the observed data.  Second, the number of parameters to estimate may be larger than the number of observations and increasing at a faster rate.  In the language of statistics, estimators based on matrix factorization may neither be identified nor consistently estimated.  In practice, results based on matrix factorization may be neither reliable nor robust.

This paper considers a subset of matrix factorization problems where the matrix factors are assumed to have non-negative elements and for at least one matrix factor, its columns sum to one.  For this subset of problems, the paper presents necessary and sufficient conditions on the observed data for the factorization to be unique.  The paper also characterizes ``natural'' bounds of the identified set for any observed data set.  In addition, the paper shows that least squares estimators can be consistently estimated as the number of observations gets large.  The theoretical results are illustrated with a topic model analysis of Ph.D. dissertation abstracts in economics and the problem of converting pictures to a smaller data set for storage and retrieval.

In 1933, the statistician and economist, Harold Hotelling, poured cold water on the idea of using matrix factorization to solve a problem plaguing educational psychology.  The problem was that the researchers had access to data with a large number of psychometric tests given to a small number of individuals.  Hotelling pointed out that the proposed solution to this ``wide data'' problem was not unique.  In fact, there are $R^2$ parameters not tied down by the data, where $R$ is the number of ``hidden factors'' or ``components'' in Hotelling's preferred terminology.  Hotelling proposed ``principal components analysis'' as an ``orderly'' method of choosing from among the infinite solutions.

Non-uniqueness may lead to a myriad of problems.  First, it may make estimation difficult as the estimator bounces between the many solutions.  The fastest algorithms available will not help when there is no unique solution to the problem.  The problem is ill-posed.  Second, the solution found by the algorithm may not be robust.  Any small change in the initial conditions may lead to vastly different results.  For example, non-uniqueness in topic models leads to different topics and different distributions of words over those topics.  Third, these models are often used for prediction and so non-uniqueness may lead to wildly inaccurate results.  If the technique is used in picture storage and retrieval is not unique, then there will be problems in recreating or finding certain pictures.  Various techniques, such as regularization, ``tuning'' of parameters or parametric assumptions may be used to solve the non-uniqueness problem. However, if such techniques are used it becomes difficult to determine how much of the results are being driven by the data and how much are being driven by arbitrary choices of the researcher.  Understanding when an algorithm will or will not produce a unique solution provides researchers and practitioners with greater comfort in relying on the results.  

A common solution to the uniqueness problem in matrix factorization problems is to place some restriction on the assumed data generating process.  \cite{BaiNg:2013} shows that by placing a number of restrictions on the matrix factors including orthonormality and various sparsity constraints, it is possible to have a unique solution that replicates principal components.  Singular value decomposition is an example of this approach.  While the approach has the advantage of providing estimable and robust algorithms, it has the disadvantage that the assumptions are strong and ad-hoc.  In particular, the data generating process may not actually satisfy the conditions assumed.  Finding a robust solution to the wrong problem is not valuable.

An alternative solution is based on a proposal illustrated by \cite{Lee:1999}.  The authors note that some important problems lead naturally to restrictions on the matrix factors.  The factorization of a data set of pictures can naturally be constrained to have non-negative matrix factors.  \cite{Huang:2013} present necessary conditions on the observed data for a non-negative matrix factorization to have a unique solution.  The authors also present sufficient conditions on the observed data for uniqueness.\footnote{\cite{Pan:2016} also presents a characterization of the non-uniqueness for NMF problems.}  Borrowing results from the econometrics literature on mixture models, this paper shows that the \cite{Huang:2013} necessary conditions are sufficient if the problem is further restricted such that at least one of the matrix factors is stochastic.  \cite{Adams:2016} shows that requiring that the columns of one of the matrix factors to add to one reduces the number of free parameters from $R^2$ to $R(R-1)$.  If the data also has what the econometrics literature has called ``tail restrictions,'' or sparsity conditions to use the computer science and engineering terminology, then the factorization is unique.  More generally, the factorization is not unique even after requiring that the matrix factors be non-negative and that at least one matrix factor be stochastic.  The paper shows that that identified set or ``natural bounds'' are tighter when the data is more informative as to which hidden factor is likely.

While requiring an SMF is an important restriction, the practical importance is less clear.  For example, both problems analyzed in the original NMF paper \citep{Lee:1999} can be naturally modeled as SMF problems.  As we see below, topic models are naturally SMF problems, in fact for topic models both matrix factors are stochastic.  Similarly, the analysis of faces described in \cite{Lee:1999} can be modeled as a SMF problem.  The image is assumed to be a convex combination of ``base images.''  The weighting matrix is stochastic.

\cite{Lee:1999} considers the idea of taking a set of pictures of faces and using matrix factorization to reduce the size of the data set and increase the speed to find pictures in the data set.  The original data has over two thousand photos and each photo is represented by 81 pixels (in the version analyzed here), giving almost two hundred thousand values to be stored in memory.  By factoring the data into 10 base images, we can reduce the storage problem by almost 90\%.  Additionally, we can speed up the retrieval problem by reducing the number of points to be checked in each picture from 81 to 10.  The trade off is that the matrix factorization is an approximation of the original data.  The smaller number of base images, the poorer the approximation but the smaller the data and the quicker the retrieval.  The more base images, the better the approximation, but the bigger the data set and the slower the retrieval.    

A standard topic model can be thought of as a non-negative matrix factorization problem where both matrix factors are stochastic.  One matrix factor represents probability distributions over hidden topics conditional on the document and the other represents probability distributions over terms (words) conditional on the topic.  The matrix factor representing the distribution over terms, is likely to be relatively sparse while the matrix factor representing the distribution over topics may not be sparse.  \cite{Huang:2013} shows that if there exist ``anchor'' words and anchor documents, then the factorization is unique.  That is, for each topic there is a term that only occurs for that topic, similarly there are a number of documents that have only one topic.  \cite{Huang:2013} point out that having anchor words and documents is sufficient but not necessary.  The necessary condition is something of the photo negative of the anchor words and documents requirement.  

The necessary condition for uniqueness states that the set of occurring terms for each topic are not subsets of each other.  That is, there must be some terms that have ``anchor'' like properties, but they don't quite have to be unique to that topic.  The difference between the necessary condition of \cite{Huang:2013} and the anchor condition is more obvious in relations to the distribution over topics.  The necessary condition states that there exist enough documents in which one of each of the topics does not occur.  That is, there has to be enough variation in the documents in regards to which topics they cover.  In particular, there must be documents that do not cover certain topics.  The anchor property states that there must be a certain number of documents with only one topic.  This property is obviously a lot stricter than requiring that a certain set of documents \emph{not} include one topic.  Moreover, it is unlikely to hold in documents longer than say tweets.  The result presented below shows that the necessary condition of \cite{Huang:2013} is also sufficient for stochastic matrix factorization problems such as topic models.

In order to check for uniqueness of our topic model estimator we simply check the \cite{Huang:2013} conditions on the two estimated matrix factors.  If the conditions hold, then it is likely that our estimated result is unique.  What if the conditions do not hold?  In this case, the non-negative conditions place ``bounds'' on the estimated parameters.  \citep{Tian:2000} calls these ``natural bounds'' as they are derived from natural assumptions that probabilities are non-negative and add to 1.  Manski calls them ``worst-case'' bounds \citep{Ho:2015}.  \cite{Henry:2014} and \cite{Adams:2016} characterize these bounds.  Although the bounds get difficult to characterize when there are more than two hidden factors, it is always straightforward to characterize a subset of the bounds that sits on the coordinate axes.  This subset is characterized by likelihood ratios.  If a term is relatively more likely to occur for one topic versus an alternative topic, then the bounds will be tighter.  Similarly, if a document is more likely to incorporate a particular topic relative to a different document, then the bounds will be tighter.  If the width of the bounds along the coordinate axes is small in every dimension, then the the identified set is small.  

The second fundamental problem with matrix factorization based estimation is that there may exist more parameters than observations, with the number of parameters growing faster than the number of observations.  Consistent estimation is important for insuring that the solution to the algorithm represents something true about the world.  If this is not true, then results may vary substantially with variation in the data and predicted values may not be close to their true values.  One solution to the ``too many parameters'' problem is to assume that certain sets of parameters are drawn from a distribution \citep{BLP:1995}.  This reduces the parameter space from the full set of parameters to just the parameters describing the distribution.  This is the solution used in the latent Dirchlet allocation approach to topic modeling \citep{Blei:2009,Blei:2012}.  The advantage of this approach is that it may be possible to have a consistent estimator, the disadvantage is that it relies on strong parametric assumptions that may not hold in the underling document generation process.  For example, the standard LDA model assumes that the distributions across topics is independent \citep{Blei:2009}.  

The approach used here is similar to the approach suggested by \cite{BaiNg:2013}.  The idea is to ``concentrate out'' one of the matrix factors, so the estimation procedure only estimates the parameters of one of the two matrix factors.  If for example there are 50 terms, 10 topics and 10,000 documents, the procedure reduces the number of parameters to be estimated from 90,490 to 490 (noting that the adding up assumption).  Moreover the proposed procedure is ``non-parametric,'' making it a much more robust algorithm.  The algorithm it is not reliant on assumptions that are unlikely to hold in the actual underling document generating process.  The procedure proposed below uses least squares to estimate the distribution over terms for each topic.  The distributions over topics for each topic can be determined by multiplying the observed data (the document-term matrix) and multiplying that by the generalized inverse of the estimated matrix factor.  The paper shows that as the number of documents gets large then the model estimates converge to the true values in probability.

Matrix factorization is used in a vast array of problems.  The common attribute of these problems is correlation in the data.  In the face problem, the ``variables'' are the different pixel positions and there is a high degree of correlation among pixels that are close to each other but also other pixels that may be associated with a different part of the face based on the lighting and complexion of the face.  In the topic model, the words are often used in conjunction with each other.  Matrix factorization can be used to characterize a document corpus or a set of picture if the correlation in the data can be characterized as a mixture across a small set of ``hidden factors.''

The paper continues as follows.  Section 2 presents the stochastic matrix factorization model.  Section 3 presents the main identification results.  Section 4 presents the estimation algorithm used for the picture reduction problem and the topic model analysis.  Section 5 presents the empirical analysis of the picture reduction problem.  Section 6 presents the empirical analysis of the PhD dissertation abstracts.  Section 6 concludes.

\section{Stochastic Matrix Factorization}

Consider a ``data matrix'' such as document-term matrix $\mathbf{X}$, where it is $N \times M$.  The objective is to factor this matrix giving two matrix factors, a matrix $\mathbf{W}$ which is $N \times R$ and a matrix $\mathbf{H}$ which is $R \times M$.  Note that the results presented here also hold, with some minor tweaks, for mixture models \citep{Adams:2016}.

\begin{equation}
\mathbf{X} = \mathbf{W} \mathbf{H}
\label{eq:XWH}
\end{equation}

Note that this factorization holds for any $R > 0$.  In this paper we will make no claims as to what $R$ is, although it should be noted that the data requirements for identification of models with larger $R$ may be higher.

The identification problem is straightforward to understand.  Consider a matrix $\mathbf{A}$ which is $R \times R$ and full-rank.  Equation (\ref{eq:XWH}) is consistent with any such matrix.

\begin{equation}
\mathbf{X} = \mathbf{W} \mathbf{A} \mathbf{A}^{-1} \mathbf{H}
\label{eq:XWAH}
\end{equation}

That is, the observed data and the assumptions of the data generating process that lead naturally to matrix factorization, provide no restrictions on an $R \times R$ matrix.

As discussed above we will make two ``natural'' assumptions that will be shown to restrict the matrix $\mathbf{A}$ and in some circumstances the only $\mathbf{A}$ that satisfies Equation (\ref{eq:XWAH}) is the identity matrix (up to rearranging the columns).  In this case the matrix factorization is unique (up to rearranging columns).

\begin{asm} (Non-Negativity) For all $\mathbf{W}$, $\mathbf{H}$ and $\mathbf{A}$ satisfying Equation (\ref{eq:XWAH}) assume that
\begin{enumerate}
\item $\mathbf{W} \mathbf{A} \ge 0$
\item $\mathbf{A}^{-1} \mathbf{H} \ge 0$
\end{enumerate}
where the notation $\mathbf{B} \ge 0$ refers to each element of the matrix being greater or equal to zero.
\label{asm:nonneg}
\end{asm}

The first assumption is the standard non-negativity assumption.  It states that each and every cell of the matrix factors must either be zero or positive.  This is a natural assumption if the matrix factors are to represent probability distributions as they do in topic models.

\begin{asm} (Adding up)  Given $\mathbf{H}$ and $\mathbf{A}$ satisfying Equation (\ref{eq:XWAH}) assume that either 
\begin{enumerate}
\item $\sum_{m =1}^M \tilde{\mathbf{H}}_{rm} = 1 \mbox{ for all } r \in \{1,...,R\}$, or
\item $\sum_{r =1}^R \tilde{\mathbf{H}}_{rm} = 1 \mbox{ for all } m \in \{1,...,M\}$
\end{enumerate}
where $\tilde{\mathbf{H}} = \mathbf{A}^{-1} \mathbf{H} $.
\label{asm:addup}
\end{asm}

The second assumption is that one of the matrix factors has rows or columns that sum to one.  In the topic model problem it is assumed that there is a probability distribution over terms for each topic, that is the rows of the $\mathbf{H}$ matrix sum to 1.  In the face problem the observed picture is assumed to be made up of a convex combinations of ``base images.''  Note that assuming both matrix factors are stochastic provides no additional identifying power.

Given these assumptions we can restrict the set of matrices, $\mathbf{A}$, that can satisfy Equation (\ref{eq:XWAH}).

\begin{lem} Given Assumption \ref{asm:addup} and assuming $\mathbf{H}$ is rank $R$, the matrix $\mathbf{A}$ that satisfies Equation (\ref{eq:XWAH}) is such that $\mathbf{A} \mathbf{1}_R = \mathbf{1}_R$, where $\mathbf{1}_R$ is a $R \times 1$ column vector of 1's.
\label{lem:Asum1}
\end{lem}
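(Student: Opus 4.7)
My plan is to derive the constraint on $\mathbf{A}$ by writing Assumption~\ref{asm:addup} in matrix form and comparing the condition satisfied by $\tilde{\mathbf{H}} = \mathbf{A}^{-1}\mathbf{H}$ with the same condition applied to the untransformed $\mathbf{H}$. The key observation that unlocks the argument is that $\mathbf{A} = \mathbf{I}$ is itself a valid choice in Equation~(\ref{eq:XWAH}), so Assumption~\ref{asm:addup} forces the original $\mathbf{H}$ (not just the transformed $\tilde{\mathbf{H}}$) to satisfy the adding-up condition. Subtracting the two identities then produces a constraint in which $\mathbf{H}$ can be cancelled using the rank hypothesis.

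In case~(1) of Assumption~\ref{asm:addup} the condition reads $\tilde{\mathbf{H}}\mathbf{1}_M = \mathbf{1}_R$, and applying it with $\mathbf{A}=\mathbf{I}$ also gives $\mathbf{H}\mathbf{1}_M = \mathbf{1}_R$. Substituting $\tilde{\mathbf{H}} = \mathbf{A}^{-1}\mathbf{H}$ into the first equation and using the second yields $\mathbf{A}^{-1}\mathbf{1}_R = \mathbf{1}_R$; left-multiplying by $\mathbf{A}$ delivers the stated conclusion $\mathbf{A}\mathbf{1}_R = \mathbf{1}_R$. Notice that this branch does not actually invoke the rank hypothesis on $\mathbf{H}$.

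In case~(2) the analogous rewriting gives $\mathbf{1}_R^T\mathbf{A}^{-1}\mathbf{H} = \mathbf{1}_M^T = \mathbf{1}_R^T\mathbf{H}$, which rearranges to $\mathbf{1}_R^T(\mathbf{A}^{-1} - \mathbf{I})\mathbf{H} = \mathbf{0}_M^T$. Here the rank hypothesis does the real work: because $\mathbf{H}$ has rank $R$, its rows span $\mathbb{R}^R$ and $\mathbf{H}\mathbf{H}^T$ is invertible, so right-multiplying by $\mathbf{H}^T(\mathbf{H}\mathbf{H}^T)^{-1}$ cancels $\mathbf{H}$ and yields $\mathbf{1}_R^T\mathbf{A}^{-1} = \mathbf{1}_R^T$, equivalently $\mathbf{1}_R^T\mathbf{A} = \mathbf{1}_R^T$. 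This is the columnwise analogue of the claim; the lemma as stated matches case~(1) directly, with the transpose version understood in case~(2).

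The only genuine obstacle is in case~(2), where one must resist the temptation to ``invert'' the rectangular matrix $\mathbf{H}$ and instead use the rank hypothesis to cancel it on the right via the right inverse $\mathbf{H}^T(\mathbf{H}\mathbf{H}^T)^{-1}$. Case~(1) is essentially a one-line substitution, and in either direction the conclusion amounts to saying that $\mathbf{1}_R$ is a (right or left) eigenvector of $\mathbf{A}$ with eigenvalue~$1$, which is exactly what is needed for the affine structure of the stochastic factor to be preserved under the transformation $\mathbf{H} \mapsto \mathbf{A}^{-1}\mathbf{H}$.
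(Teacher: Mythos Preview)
Your proposal is correct and follows essentially the same two-case argument as the paper: in case~(1) you substitute $\mathbf{H}\mathbf{1}_M=\mathbf{1}_R$ into $\mathbf{A}^{-1}\mathbf{H}\mathbf{1}_M=\mathbf{1}_R$ exactly as the paper does, and in case~(2) your right inverse $\mathbf{H}^T(\mathbf{H}\mathbf{H}^T)^{-1}$ is precisely the generalized inverse $\mathbf{H}^{+}$ that the paper invokes for a full-row-rank $\mathbf{H}$. Your added observations---that case~(1) does not actually require the rank hypothesis, and that case~(2) yields the transposed conclusion $\mathbf{1}_R^T\mathbf{A}=\mathbf{1}_R^T$ rather than $\mathbf{A}\mathbf{1}_R=\mathbf{1}_R$---are accurate refinements that the paper glosses over.
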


\begin{proof} Step 1.\\

First consider (1) of Assumption \ref{asm:addup}  and if $\mathbf{A}$ is the identity matrix, we have
\begin{equation}
\mathbf{H} \mathbf{1}_M = \mathbf{1}_R
\end{equation}
Also by Assumption \ref{asm:addup} for any $\mathbf{A}$ we have
\begin{equation}
\mathbf{A}^{-1} \mathbf{H} \mathbf{1}_M = \mathbf{1}_R 
\end{equation}
By substitution
\begin{equation}
\mathbf{A}^{-1} \mathbf{1}_R = \mathbf{1}_R 
\end{equation}
As $\mathbf{A}$ is full-rank we have the result.

Step 2.\\
Alternatively, consider (2) of Assumption \ref{asm:addup} and let $\mathbf{A}$ be the identity matrix
\begin{equation}
\mathbf{1}^T_R \mathbf{H}  = \mathbf{1}^T_M
\end{equation}
If $\mathbf{H}$ is rank $R$
\begin{equation}
\mathbf{1}^T_R  = \mathbf{1}^T_M \mathbf{H}^{+}
\end{equation}
where $\mathbf{H}^{+}$ refers to the generalized inverse.  Also, for any $\mathbf{A}$
\begin{equation}
\mathbf{1}^T_R \mathbf{A}^{-1}  = \mathbf{1}^T_M \mathbf{H}^{+}
\end{equation}
So
\begin{equation}
\mathbf{1}^T_R \mathbf{A}^{-1}  = \mathbf{1}^T_R
\end{equation}
Rearranging, we have the result.
\end{proof}

Lemma \ref{lem:Asum1} states that there are $R(R-1)$ free parameters in the stochastic matrix factorization problem rather than the $R^2$ free parameters in the original matrix factorization problem.  Below the paper shows that the implications of this lemma are actually larger than the reduction in free parameters.  Given this lemma, the next section shows that the necessary condition of \cite{Huang:2013} is also sufficient.

Note that the lemma is written assuming that it is the $\mathbf{H}$ matrix that is stochastic, but the result holds for either of the matrix factors.

\section{Identification}

To illustrate the identification or non-uniqueness problem consider a simple case where there are two topics ($R = 2$).  By Lemma \ref{lem:Asum1}, we can write the matrix $\mathbf{A}$ with two parameters.

\begin{equation}
\mathbf{A} = \left [ \begin{array}{cc}
                              1 - a  &  a \\
                              b       & 1 - b
                              \end{array} \right ]
\end{equation}

The inverse is

\begin{equation}
\mathbf{A}^{-1} = \left [ \begin{array}{cc}
                              \frac{1 - b}{1 - a - b}  &  -\frac{a}{1 - a - b} \\
                              -\frac{b}{1 - a - b}       & \frac{1 - a}{1 - a - b}
                              \end{array} \right ]
\end{equation}

Note that while the matrix factors must be non-negative, the is no requirement that the matrix $\mathbf{A}$ be non-negative.

Assumption \ref{asm:nonneg} places constraints on the two parameters $a$ and $b$.  In particular, the following inequalities must hold.

Consider the case where $1 - a - b > 0$, then for all $i \in \{1,...,N\}$ and $j \in \{1,...,m\}$,

\begin{enumerate}
\item $\mathbf{W}_{i1} (1 - a) + \mathbf{W}_{i2} b \ge 0$
\item $\mathbf{W}_{i1} a + \mathbf{W}_{i2} (1 - b) \ge 0$
\item $\mathbf{H}_{1j} (1 - b) - \mathbf{H}_{2j} a \ge 0$
\item $-\mathbf{H}_{1j} b + \mathbf{H}_{2j} (1 - a) \ge 0$
\end{enumerate}

If $b = 0$, then from inequalities, $a \in [-a_L, a_H]$, where
\begin{equation}
a_L = \min_{i \in \{1,...,N\}} \frac{\mathbf{W}_{i2}}{\mathbf{W}_{i1}}
\end{equation}
and
\begin{equation}
a_H = \min_{j \in \{1,...,M\}} \frac{\mathbf{H}_{1j}}{\mathbf{H}_{2j}}
\end{equation}
Similarly, if $a = 0$, then $b \in [-b_L, b_H]$ where
\begin{equation}
b_L = \min_{i \in \{1,...,N\}} \frac{\mathbf{W}_{i1}}{\mathbf{W}_{i2}}
\end{equation}
and
\begin{equation}
b_H = \min_{j \in \{1,...,M\}} \frac{\mathbf{H}_{2j}}{\mathbf{H}_{1j}}
\end{equation}
Note that these conditions are based on likelihood ratios.  If we observe a word that is much more likely to have come from Topic 1 than from Topic 2, then the bounds are tighter.  Similarly if we observe a document that is much more likely to be about Topic 2 rather than Topic 1, the bounds are tighter.  Figure \ref{fig:2dproof} illustrates the identified set.  The set $\mathcal{A}$ is the interior of the four inequalities.  

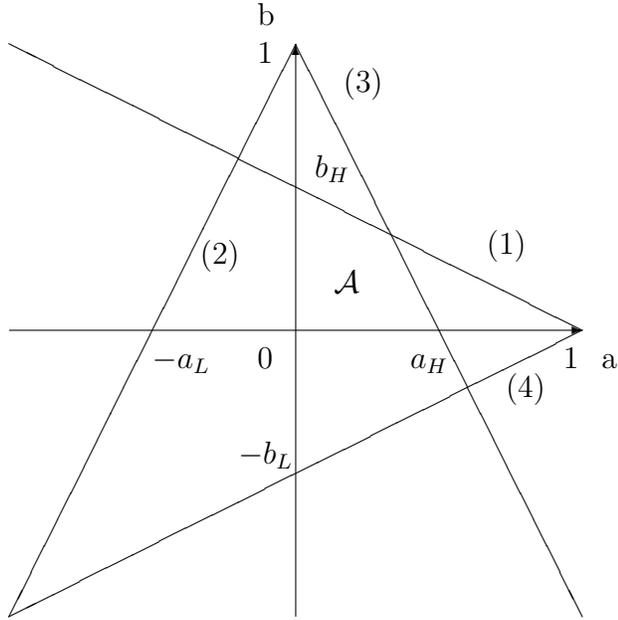
\begin{figure}[h!]
\begin{center}
\setlength{\unitlength}{.5in}
\begin{picture}(0,7)(-3,-6)
\put(-6,-3){\vector(1,0){6}}
\put(-3,-6){\vector(0,1){6}}
\thinlines
\put(0,-3){\line(-2,-1){6}}
\put(-3,0){\line(-1,-2){3}}
\put(-3,0){\line(1,-2){3}}
\put(0,-3){\line(-2,1){6}}
\put(-3.4,0.2){b}
\put(0.2,-3.4){a}
\put(-3.4,-3.4){$0$}
\put(-0.8,-3.7){(4)}
\put(-4,-2.3){(2)}
\put(-1,-2.2){(1)}
\put(-2.5,-0.5){(3)}
\put(-1.8,-3.4){$a_H$}
\put(-2.8,-1.4){$b_H$}
\put(-4.5,-3.4){$-a_L$}
\put(-3.6,-4.4){$-b_L$}
\put(-3.4,-0.2){$1$}
\put(-0.2,-3.4){$1$}
\put(-2.6,-2.6){$\mathcal{A}$}
\end{picture}
\caption{The set $\mathcal{A}$ in the two factor case.\label{fig:2dproof}}
\end{center}
\end{figure}

We also see the conditions sufficient for uniqueness.  We need four conditions to hold for $a = b = 0$.  First, there exists a word that doesn't occur for Topic 1 but does occur for Topic 2.  Second, there is a word that is the reverse.  Third there is a document that does not contain Topic 1 but does contain Topic 2.  Finally, there is a document that is the reverse.  If these four conditions hold in the observed data, then the unconstrained parameters must be zero, and the matrix $\mathbf{A}$ must be the identity matrix (up to rearranging the columns).  Note that for the two topic case these identification conditions are the ``anchor'' conditions of \cite{Arora:2012}, but for more general cases they correspond to the necessary condition presented in \cite{Huang:2013}.  This suggests that the necessary condition of \cite{Huang:2013} is also sufficient for stochastic matrix factorization.  The next result says exactly that.

\begin{thm} Let Assumptions \ref{asm:nonneg} and \ref{asm:addup} hold and define
\begin{equation}
\begin{array}{cc}
\mathcal{I}_r = \{i \in \{1,...,N\} | \mathbf{W}_{ir} \neq 0\}\\
\mathcal{J}_r = \{j \in \{1,...,M\} | \mathbf{H}_{rj} \neq 0\}
\end{array}
\end{equation}
and 
\begin{equation}
\mathbf{X} = \mathbf{W} \mathbf{A} \mathbf{A}^{-1} \mathbf{H}
\end{equation}
then $\mathbf{A} = \mathbf{I}$ (up to rearranging columns) if and only if there do not exist $r_1,r_2 \in \{1,...,R\}$, $r_1 \neq r_2$ such that $\mathcal{I}_{r_1} \subseteq \mathcal{I}_{r_2}$ or $\mathcal{J}_{r_1} \subseteq \mathcal{J}_{r_2}$.
\label{thm:unique}
\end{thm}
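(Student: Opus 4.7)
The plan is to prove each direction of the biconditional using Lemma~\ref{lem:Asum1} as the central lever. For necessity (``only if''), I would contrapose and construct an explicit counterexample when a subset relation holds. Assuming $\mathcal{I}_{r_1}\subseteq\mathcal{I}_{r_2}$ (the $\mathcal{J}$ case is symmetric), set $A_\epsilon:=I+\epsilon\,e_{r_1}(e_{r_2}-e_{r_1})^T$, which preserves $A_\epsilon\mathbf{1}=\mathbf{1}$ because $(e_{r_2}-e_{r_1})^T\mathbf{1}=0$. Sherman--Morrison yields $A_\epsilon^{-1}=I-\tfrac{\epsilon}{1-\epsilon}e_{r_1}(e_{r_2}-e_{r_1})^T$. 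The condition $WA_\epsilon\ge 0$ reduces entrywise to $(1-\epsilon)W_{i,r_2}\ge \epsilon W_{i,r_1}$ for each $i$, which is satisfiable for all small $\epsilon>0$ precisely because the nesting gives $W_{i,r_1}>0\Rightarrow W_{i,r_2}>0$; meanwhile $A_\epsilon^{-1}H\ge 0$ holds automatically since only row $r_1$ of $A_\epsilon^{-1}$ is perturbed and its only negative coefficient multiplies non-negative entries of $H$.

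For sufficiency (``if''), assume no nesting among $\{\mathcal{I}_r\}$ or $\{\mathcal{J}_r\}$, and let $A$ be invertible with $A\mathbf{1}=\mathbf{1}$, $WA\ge 0$, $A^{-1}H\ge 0$. I would argue in two steps. Step~1: show $A\ge 0$ and $A^{-1}\ge 0$. Working column by column, each $\alpha_s:=A_{\cdot s}$ lies in the polyhedral cone $\mathcal{C}_W:=\{v:Wv\ge 0\}$, and by Lemma~\ref{lem:Asum1} the columns jointly satisfy $\sum_s\alpha_s=\mathbf{1}$. The non-subset condition furnishes, for every ordered pair $(r_1,r_2)$, an index $i$ with $W_{i,r_1}>0$ and $W_{i,r_2}=0$; combining these constraints across all such witnesses with the row-sum identity should force each $\alpha_s\ge 0$. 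A symmetric analysis on $A^{-1}$, using $A^{-1}\mathbf{1}=\mathbf{1}$ (also from Lemma~\ref{lem:Asum1}) and the non-subset condition on the $\mathcal{J}_r$'s, yields $A^{-1}\ge 0$. Step~2: a non-negative row-stochastic matrix with non-negative row-stochastic inverse must be a permutation matrix, since $(A^{-1}A)_{rs}=0$ for $r\ne s$ forces every non-negative summand $A^{-1}_{rk}A_{ks}$ to vanish, and combining with invertibility and row-sum constraints forces exactly one unit entry per row of $A$.

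The main obstacle is Step~1 of sufficiency. In the two-topic case treated explicitly above, the argument is transparent: each non-subset witness $i\in\mathcal{I}_{r_1}\setminus\mathcal{I}_{r_2}$ makes the inequality $(WA)_{i\cdot}\ge 0$ degenerate (via $W_{i,r_2}=0$) to sign constraints on individual coordinates, directly delivering $a,b\in[0,1]$. For general $R$ the pairwise non-subset condition is strictly weaker than an anchor-word condition, so one cannot reduce constraints to individual coordinates so cleanly; the extra leverage must come from Lemma~\ref{lem:Asum1}'s row-sum identity, which couples the $R$ columns of $A$ via $\sum_s\alpha_s=\mathbf{1}$. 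Making this coupling do the work---plausibly by induction on $R$ or through a careful extreme-ray analysis of $\mathcal{C}_W$ combined with the dual analysis on $\{u:uH\ge 0\}$---is where I expect the bulk of the technical effort to lie, and where the SMF assumption (as distinguished from general NMF) does its work.
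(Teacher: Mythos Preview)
The paper does not actually prove this theorem; it cites \cite{Adams:2016}. So there is no in-paper argument to compare against, and I can only comment on the internal soundness of your plan.

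Your necessity construction has the two constraints swapped. With $A_\epsilon=I+\epsilon\,e_{r_1}(e_{r_2}-e_{r_1})^T$ one gets $(WA_\epsilon)_{\cdot,r_1}=(1-\epsilon)W_{\cdot,r_1}$ and $(WA_\epsilon)_{\cdot,r_2}=W_{\cdot,r_2}+\epsilon W_{\cdot,r_1}$, so $WA_\epsilon\ge 0$ is automatic for $\epsilon\in[0,1)$ and uses no nesting at all; the claimed inequality $(1-\epsilon)W_{i,r_2}\ge \epsilon W_{i,r_1}$ does not arise. Conversely, $(A_\epsilon^{-1}H)_{r_1,\cdot}=\tfrac{1}{1-\epsilon}H_{r_1,\cdot}-\tfrac{\epsilon}{1-\epsilon}H_{r_2,\cdot}$, which is nonnegative for small $\epsilon$ only if $\mathcal{J}_{r_2}\subseteq\mathcal{J}_{r_1}$, a $\mathcal{J}$-nesting rather than the assumed $\mathcal{I}$-nesting. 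The fix is easy (take $A_\epsilon=I-\epsilon\,e_{r_1}(e_{r_2}-e_{r_1})^T$, whose inverse is entrywise nonnegative, so $A_\epsilon^{-1}H\ge 0$ is automatic while $WA_\epsilon\ge 0$ now reads $W_{i,r_2}\ge \epsilon W_{i,r_1}$ and uses $\mathcal{I}_{r_1}\subseteq\mathcal{I}_{r_2}$), but as written the argument does not go through.

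The more serious gap is Step~1 of sufficiency. Your plan is to deduce $A\ge 0$ purely from $WA\ge 0$, $A\mathbf{1}=\mathbf{1}$, and the non-nesting of the $\mathcal{I}_r$'s (and symmetrically $A^{-1}\ge 0$ from the $\mathcal{J}$ side). This decoupled strategy is false for $R\ge 3$. Take
\[
W=\begin{pmatrix}1&0&1\\ 1&1&0\\ 0&1&1\end{pmatrix},\qquad
A=\begin{pmatrix}-0.1&0.5&0.6\\ 0.1&0.9&0\\ 0.1&0&0.9\end{pmatrix}.
\]
Here $\mathcal{I}_1=\{1,2\}$, $\mathcal{I}_2=\{2,3\}$, $\mathcal{I}_3=\{1,3\}$ with no inclusions, $A\mathbf{1}=\mathbf{1}$, and one checks $WA\ge 0$ entrywise; yet $A_{11}<0$. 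So the pairwise witnesses plus the row-sum identity do \emph{not} force $A\ge 0$. Whatever argument establishes sufficiency must therefore use the $W$- and $H$-side constraints jointly rather than separately; the two-step plan ``first $A\ge 0$, then $A^{-1}\ge 0$, then permutation'' cannot work as stated.
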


\begin{proof}  See \cite{Adams:2016}.  \end{proof}

Theorem \ref{thm:unique} is based on the necessary conditions presented in Theorem 3 of \cite{Huang:2013}.  The proof presented in \cite{Adams:2016} adjusts the proof of \cite{Huang:2013} to account for the extra constraint provided by Assumption \ref{asm:addup}.  The proof shows that the necessary condition is the same.  This means that the minimum requirement for identification is the sparsity condition presented in \cite{Huang:2013}.  The paper shows that the additional constraint of adding up implies that this necessary condition is also sufficient.

\section{Estimation}

The previous section showed that it is possible to uniquely determine the parameters of the stochastic matrix factorization problem.  This section shows that it is possible to consistently estimate the matrix factor whose dimensions do not increase as the data set increases.

\subsection{Picture Reduction Problem}

For a gray-scale picture, pixel $j$ in picture $i$ is given as 
\begin{equation}
x_{ij} = \sum_{r = 1}^{R} w_{ir} h_{rj} + e_{ij}
\end{equation}
where $h_{rj} \in [0,1]$, $w_{ir} \in [0,1]$, $\sum_{r=1}^R w_{ir} = 1$, $e_{ij} \in \Re$, and $E(e_{ij}) = 0$ for all $i \in \{1,...,N\}$, $j \in \{1,...,M\}$ and $r \in \{1,...,R\}$.  In matrix notation we have
\begin{equation}
\mathbf{X} = \mathbf{W} \mathbf{H} + \mathbf{E}
\end{equation}
where in expectation $\mathbf{E} = 0$.  

If this model seems familiar, it is because it is a linear factor model \citep{Bai:2003}.  Each picture has a common component determined by the weighting vector and there are a set of ``factors'' or ``base images'' that are true for the whole set of pictures.  The factorization is an approximation of the actual picture, where that approximation is assumed to be unbiased at the pixel level.  The difference between this problem and a more standard linear factor model is that the values of the pixels naturally lie between 0 and 1 (assuming a gray-scale image analyzed here).  This leads naturally to the restriction that the pixels of the base images also lie between 0 and 1.

If we have a random sample of $N_s$ pictures we can factorize the sample such that
\begin{equation}
\mathbf{X}_s = \mathbf{W}_s \mathbf{H}_s
\end{equation}
where $s$ denotes the fact that the data matrix and the matrix factors are from the sample.
This can be rearranged to give 
\begin{equation}
\mathbf{X}_s \mathbf{H}_s^{+} = \mathbf{W_s}
\end{equation}
where $\mathbf{H}_s \mathbf{H}^{+} = \mathbf{I}$ denotes the generalized inverse of one of the matrix factors.  This inverse exists if $\mathbf{H}$ has rank of $R$.  

We can write the estimator as follows, where $||\cdot||_F$ denotes the Frobenius norm.  This is a least squares estimator where the objective is to minimize the squared distances between the cells.

\begin{equation}
\begin{array}{ll}
\min_{\{\mathbf{H}_s\}}  & ||\mathbf{X}_s - \mathbf{X}_s \mathbf{H}_s^{+} \mathbf{H}_s)||_F \\
    & \\
s.t. &  \mathbf{X}_s \mathbf{H}_s^{+} \mathbf{1}_R = \mathbf{1}_R\\
      & \mathbf{X}_s \mathbf{H}_s^{+} \ge 0\\
      &  \mathbf{H}_s \ge 0\\
      & \mathbf{H}_s \le 1
\end{array}
\label{eq:estimator}
\end{equation}

At its heart this is just a least-squares optimizer.  One issue is that the adding up and non-negativity constraints provide some additional ``moments'' on the estimator.  In the paper the estimator constrains the parameters to be between zero and one in a standard way.  The additional adding up  and non-negativity constraints are placed in the optimizer as an additional conditions to be minimized.  These additional conditions are weighted (see discussion in the results section).\footnote{See \cite{Hansen:1982} for a discussion of optimal weights for such an estimator.  Here the weights are chosen somewhat arbitrarily (``tuned'').}

Given that we have a method for determining $H_s$, is there any sense in which this is the matrix we are looking for?  The following corollary states that the estimator is consistent.  That is, if we were able to get a ``large enough'' data set, then the matrix estimated is equal to the matrix of interest.

\begin{cor}  Let
\begin{equation}
\mathbf{x}^{T} = \mathbf{w}^{T} \mathbf{H}
\end{equation}
where $\mathbf{x}$ is a $M \times 1$ column matrix that represents the ``average picture'' and $\mathbf{w}$ is a $R \times 1$ column matrix which represents the distribution over base images in the population of images.  Let $M > R$ and the assumptions of Theorem \ref{thm:unique} hold for the observed data then as $N_s \rightarrow \infty$, $\mathbf{H}_s = \mathbf{H}$.
\label{cor:large}
\end{cor}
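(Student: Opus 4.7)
The plan is to reduce the constrained least-squares problem in~\eqref{eq:estimator} to a standard $M$-estimator consistency argument: show that the rescaled sample objective converges to a deterministic population criterion, and that the population criterion is uniquely minimised at the true $\mathbf{H}$ by Theorem~\ref{thm:unique}. First, I would rewrite the Frobenius objective as a trace. Letting $P_s = \mathbf{H}_s^{+}\mathbf{H}_s$ denote the orthogonal projector onto the row space of $\mathbf{H}_s$,
\begin{equation*}
\|\mathbf{X}_s - \mathbf{X}_s \mathbf{H}_s^{+}\mathbf{H}_s\|_F^{2} \;=\; \operatorname{tr}\bigl((\mathbf{I}-P_s)\,\mathbf{X}_s^{T}\mathbf{X}_s\,(\mathbf{I}-P_s)\bigr).
\end{equation*}
Dividing by $N_s$ and using that the rows of $\mathbf{X}_s$ are i.i.d.\ with second-moment matrix $\mathbf{\Sigma} = \mathbf{H}^{T}\mathbf{\Omega}\mathbf{H} + \mathbf{\Sigma}_e$, where $\mathbf{\Omega}=E[\mathbf{w}_i\mathbf{w}_i^{T}]$ is the population Gram matrix of mixing weights and $\mathbf{\Sigma}_e$ is the residual covariance, the law of large numbers delivers $N_s^{-1}\mathbf{X}_s^{T}\mathbf{X}_s \to \mathbf{\Sigma}$. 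The rescaled sample objective therefore converges pointwise to $g(P_s) = \operatorname{tr}\bigl((\mathbf{I}-P_s)\,\mathbf{\Sigma}\,(\mathbf{I}-P_s)\bigr)$.

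Second, I would identify the set of minimisers of $g$ subject to the constraints in~\eqref{eq:estimator}. Because $M>R$ and $\mathbf{H}$ has full row rank, choosing $P_s = \mathbf{H}^{+}\mathbf{H}$ annihilates the signal term $\mathbf{H}^{T}\mathbf{\Omega}\mathbf{H}$, leaving only the noise residual $\operatorname{tr}((\mathbf{I}-P_s)\mathbf{\Sigma}_e(\mathbf{I}-P_s))$. Under a mild non-degeneracy condition on $\mathbf{\Sigma}_e$ (for instance isotropy, or more generally that the leading $R$-dimensional invariant subspace of $\mathbf{\Sigma}$ coincides with the row space of $\mathbf{H}$), every rank-$R$ projector whose range differs from that row space strictly inflates $g$. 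Thus any population minimiser $\mathbf{H}_s^{\star}$ satisfies $\mathbf{H}_s^{\star} = \mathbf{A}^{-1}\mathbf{H}$ for some invertible $R\times R$ matrix $\mathbf{A}$. The feasibility constraints then translate into non-negativity of $\mathbf{A}^{-1}\mathbf{H}$ and of $\mathbf{W}\mathbf{A}$, together with $\mathbf{A}\mathbf{1}_R = \mathbf{1}_R$ via Lemma~\ref{lem:Asum1}. Theorem~\ref{thm:unique}, applied to the observed $\mathbf{W}$ and $\mathbf{H}$, then forces $\mathbf{A}$ to be a permutation of the identity, pinning down $\mathbf{H}_s^{\star}=\mathbf{H}$ up to column relabelling.

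The final step is a standard argmin-continuity argument. The feasible region is compact (each entry of $\mathbf{H}_s$ lies in $[0,1]$), the map $\mathbf{H}_s \mapsto g(\mathbf{H}_s^{+}\mathbf{H}_s)$ is continuous on the open set of full-row-rank matrices, and uniform convergence of the rescaled sample objective on each compact sub-region of that set follows from the convergence of $N_s^{-1}\mathbf{X}_s^{T}\mathbf{X}_s$ together with continuity of $P_s$ in $\mathbf{H}_s$ on the full-rank stratum. Combining these ingredients and invoking a textbook argmin theorem yields $\mathbf{H}_s \to \mathbf{H}$ in probability.

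The hardest part, I expect, is the behaviour at rank-deficient $\mathbf{H}_s$, where the Moore--Penrose inverse $\mathbf{H}_s^{+}$ is discontinuous and the standard consistency machinery breaks down. I would handle this by arguing directly that a rank-deficient feasible $\mathbf{H}_s$ can absorb at most an $(R-1)$-dimensional piece of the row space of $\mathbf{H}$ and therefore leaves a residual of order $\operatorname{tr}(\mathbf{H}^{T}\mathbf{\Omega}\mathbf{H})$ in the signal component of the objective, which exceeds the noise-floor value achieved near $\mathbf{H}$ for $N_s$ large, so rank-deficient candidates are eventually excluded from any neighbourhood of the argmin. A secondary obstacle is the need for a noise-side condition to guarantee that the population criterion truly pins down the row space of $\mathbf{H}$; absent such a condition, one would have to leverage the non-negativity and adding-up constraints directly in the subspace-identification step, rather than using them only to break the residual $\mathbf{A}$ ambiguity.
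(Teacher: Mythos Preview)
Your proposal is sound in outline but takes a genuinely different route from the paper. The paper does not set up an $M$-estimator or second-moment argument at all: it simply left-multiplies the sample factorisation $\mathbf{X}_s = \mathbf{W}_s\mathbf{H}_s + \mathbf{E}_s$ by $\frac{1}{N_s}\mathbf{1}_{N_s}^{T}$, applies the law of large numbers to each row average so that $\frac{1}{N_s}\mathbf{1}_{N_s}^{T}\mathbf{X}_s \to \mathbf{x}^{T}$, $\frac{1}{N_s}\mathbf{1}_{N_s}^{T}\mathbf{W}_s \to \mathbf{w}^{T}$, and $\frac{1}{N_s}\mathbf{1}_{N_s}^{T}\mathbf{E}_s \to \mathbf{0}$, and then reads off $\mathbf{x}^{T} = \mathbf{w}^{T}\mathbf{H}_s$ in the limit; since $\mathbf{x}^{T} = \mathbf{w}^{T}\mathbf{H}$ by definition, the paper concludes $\mathbf{H}_s = \mathbf{H}$. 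Your approach via $N_s^{-1}\mathbf{X}_s^{T}\mathbf{X}_s \to \mathbf{\Sigma}$, projector minimisation, and an argmin theorem is the standard rigorous path for least-squares factor estimators and makes explicit the side conditions (noise-subspace alignment, behaviour on the rank-deficient boundary) that a full proof actually needs. The paper's first-moment shortcut is far shorter, but note that as written the single vector equation $\mathbf{w}^{T}\mathbf{H}_s = \mathbf{w}^{T}\mathbf{H}$ does not by itself determine an $R\times M$ matrix; the paper is implicitly leaning on Theorem~\ref{thm:unique} to carry the identification without spelling out how uniqueness at the population level propagates to the sample estimator. Your proposal fills precisely that gap, at the cost of the additional noise assumption you correctly flag.
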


\begin{proof}
By Theorem \ref{thm:unique} and random sampling, averaging across pictures we have
\begin{equation}
\frac{1}{N_s} \mathbf{1}_{N_s}^T \mathbf{X}_s = \frac{1}{N_s} \mathbf{1}_{N_s}^T \mathbf{W}_s  \mathbf{H}_s +  \frac{1}{N_s} \mathbf{1}_{N_s}^T \mathbf{E}_s 
\end{equation}
As $N_s \rightarrow \infty$, 
\begin{equation}
\mathbf{x}^T = \mathbf{w}^T  \mathbf{H}_s  
\end{equation}
\end{proof}

This suggests that we can find a consistent estimator as the number of pictures gets large.

\subsection{Topic Models}  

A topic model is a standard method for representing and analyzing large document sets.  In this model, a document is assumed to be represented by a ``bag of words''.  That is, the word placement is ignored and only the word count is used.  The probability of observing term $j$ in document $i$ is 
\begin{equation}
\Pr(word_{ij} = term) = \sum_{r = 1}^R \Pr(topic_r | document_i) \Pr(word_j = term | topic_r)
\end{equation}
or
\begin{equation}
x_{ij} = \sum_{r=1}^R w_{ir} h_{rj}
\end{equation}
where $x_{ij}$ is the probability of term $j$ occurring in document $i$, $w_{ir}$ is the probability of topic $r$ occurring given document $i$ and $h_{rj}$ is the probability of term $j$ occurring given topic $r$.  In this model, the distribution of terms in the document is a mixture model where the mixture probabilities vary across the document set.  

In matrix notation we have
\begin{equation}
\mathbf{X} = \mathbf{W} \mathbf{H}
\end{equation}
Note that the matrix $\mathbf{X}$ is a general representation of the document corpus given the bag of words assumption.  The other assumption of the model is that the correlation of terms in the document is determined by the conditional mixture distribution.

If we have a random sample of $N_s$ documents we have our sample version of the matrix factorization problem.
\begin{equation}
\mathbf{X}_s = \mathbf{W}_s \mathbf{H}_s
\end{equation}
This model can be estimated with an estimator that is almost the same as the one used for the picture reduction problem.  The difference is that there is one additional constraint.  Here the rows of the $\mathbf{H}_s$ matrix are assumed to add to one.  This is a straightforward constraint to add to the estimator.\footnote{Note that it is not necessary to include the additional constraints on the $\mathbf{W}_s$ matrix, and removing them may speed up the estimator.}  The consistency result is also similar but not quite the same.

\begin{cor}  Let
\begin{equation}
\mathbf{x}^{T} = \mathbf{w}^{T} \mathbf{H}
\end{equation}
where $\mathbf{x}$ is a $M \times 1$ column matrix that represents the word distribution across the corpus and $\mathbf{w}$ is a $R \times 1$ column matrix which represents the distribution over topics across the corpus.  Let $M > R$ and the assumptions of Theorem \ref{thm:unique} hold for the observed data then as $N_s \rightarrow \infty$, $\Pr(\mathbf{H}_s - \mathbf{H} = 0) = 1$.
\label{cor:large2}
\end{cor}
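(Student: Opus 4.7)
The plan is to mirror the proof of Corollary \ref{cor:large}, adjusted for the fact that in the topic model setting each entry of $\mathbf{X}_s$ is a random empirical word frequency rather than a directly observed quantity. This stochasticity is the reason the conclusion is phrased as $\Pr(\mathbf{H}_s - \mathbf{H} = 0) = 1$ in the limit rather than as a deterministic equality.

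First I would pre-multiply the sample factorization $\mathbf{X}_s = \mathbf{W}_s \mathbf{H}_s$ by $\tfrac{1}{N_s} \mathbf{1}_{N_s}^T$ to obtain the empirical average term distribution over the corpus:
\begin{equation*}
\frac{1}{N_s} \mathbf{1}_{N_s}^T \mathbf{X}_s = \left( \frac{1}{N_s} \mathbf{1}_{N_s}^T \mathbf{W}_s \right) \mathbf{H}_s .
\end{equation*}
Under random sampling of documents, the Law of Large Numbers applied to the (bounded) rows of $\mathbf{X}_s$ and of $\mathbf{W}_s$ gives $\tfrac{1}{N_s} \mathbf{1}_{N_s}^T \mathbf{X}_s \to \mathbf{x}^T$ and $\tfrac{1}{N_s} \mathbf{1}_{N_s}^T \mathbf{W}_s \to \mathbf{w}^T$ in probability, so in the limit $\mathbf{x}^T = \mathbf{w}^T \mathbf{H}_s$ holds with probability one. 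Combining this with the hypothesis that the sparsity conditions of Theorem \ref{thm:unique} hold for the observed data, and using $M > R$ to guarantee $\mathbf{H}$ has rank $R$, the population-level factorization is unique up to a column permutation, forcing $\mathbf{H}_s = \mathbf{H}$ with probability one.

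The main obstacle I anticipate is rigorously passing from in-probability convergence of the sample moments to in-probability convergence of the minimizer of the constrained problem in (\ref{eq:estimator}). Because that estimator involves the Moore--Penrose inverse $\mathbf{H}_s^{+}$, which is continuous only away from rank drops, the cleanest route is a standard extremum-estimator argument: the feasible set is compact (non-negative, bounded above by one, with row sums equal to one on $\mathbf{H}_s$), the Frobenius-norm objective is continuous, and the population problem has the unique minimizer $\mathbf{H}$ by Theorem \ref{thm:unique}. As long as rank $R$ is maintained in a neighborhood of the truth, the generalized inverse remains well-behaved and the continuous-mapping step goes through, yielding the stated almost-sure equality in the limit.
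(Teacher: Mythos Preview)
Your proposal follows essentially the same route as the paper: pre-multiply the sample factorization by $\tfrac{1}{N_s}\mathbf{1}_{N_s}^T$, apply the law of large numbers to both $\tfrac{1}{N_s}\mathbf{1}_{N_s}^T\mathbf{X}_s$ and $\tfrac{1}{N_s}\mathbf{1}_{N_s}^T\mathbf{W}_s$, and then use the uniqueness from Theorem~\ref{thm:unique} to conclude $\mathbf{H}_s=\mathbf{H}$. The paper writes this last step by setting $\mathbf{H}_s=\mathbf{H}+\mathbf{E}_s$ and arguing $\mathbf{w}^T\mathbf{E}_s\to\mathbf{0}_M$ forces $\mathbf{E}_s=0$; your extremum-estimator paragraph actually supplies more rigor on the passage from moment convergence to estimator convergence than the paper's own proof does, but the core argument is identical.
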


\begin{proof}
By Theorem \ref{thm:unique},
$\mathbf{H}_s$ is uniquely defined (up to relabeling) by 
\begin{equation}
\mathbf{X}_s = \mathbf{W}_s \mathbf{H}_s
\end{equation}
Averaging across documents we have
\begin{equation}
\frac{1}{N_s} \mathbf{1}_{N_s}^T \mathbf{X}_s = \frac{1}{N_s} \mathbf{1}_{N_s}^T \mathbf{W}_s  \mathbf{H}_s 
\end{equation}
Let $\mathbf{H}_s = \mathbf{H} + \mathbf{E}_s$ where $\mathbf{H}$ is as defined above.  Substituting in, we have
\begin{equation}
\frac{1}{N_s} \mathbf{1}_{N_s}^T  \mathbf{X}_s = \frac{1}{N_s} \mathbf{1}_{N_s}^T \mathbf{W}_s \mathbf{H}\ + \frac{1}{N_s} \mathbf{1}_{N_s}^T \mathbf{W}_s \mathbf{E}_s 
\end{equation}
By the law of large numbers $\frac{1}{N_s} \mathbf{1}_{N_s}^T \mathbf{X}_s \rightarrow \mathbf{x}^T$ and  $\frac{1}{N_s} \mathbf{1}_{N_s}^T \mathbf{W}_s  \rightarrow \mathbf{w}^{T}$, so for large enough $N_s$
\begin{equation}
\mathbf{x}^{T} =\mathbf{w}^{T} \mathbf{H}  +  \mathbf{w}^{T} \mathbf{E}_s  
\end{equation}
By assumption, $\ \mathbf{w}^{T} \mathbf{E}_s \rightarrow \mathbf{0}_M$, where $\mathbf{0}_M$ is a $M \times 1$ column matrix of zeros.  From this, each element of $\mathbf{E}_s$ must be zero with probability one.
\end{proof}

Corollary \ref{cor:large2} states that if the identification conditions hold in the observed data, then as the number of documents in the sample gets large, the estimated parameters converge to the true values with probability one.

\section{The Face Data}

The paper illustrates the stochastic matrix factorization method using a data set analyzed by \cite{Lee:1999} in their original paper on non-negative matrix factorization.  The data set includes a little over 2,400 pictures of faces.  The original pictures were compressed to $19 \times 19$ gray-scale images.  Here, the pictures have been simplified to $9 \times 9$ gray-scale images by averaging over $2 \times 2$ pixel blocks and removing the bottom and far right edges.  In the original paper, the authors estimate a model with 49 base images.  Here we will assume 10 base images.

Given the aggregation, the smaller number of base images and the method described above we have 810 parameters to estimate using a little under 2,500 observations.  This compares to the original paper which estimated some 137,000 parameters with on the same data.  I note that the number of parameters is only 18,000 if the above methodology is used.  The approximation is estimated using a least squares estimator with additional moments to account for the constraint that one of the matrices is stochastic.  In the estimator used, the weighting on summation to 1 requirement is set to 100, and the weighting on the non-negativity requirement is set to 10.  Note that these weights are relative to 1 for each cell difference being minimized.\footnote{A number of different weighting schemes were experimented with.}

\begin{figure}[htp]
\includegraphics[scale=0.5]{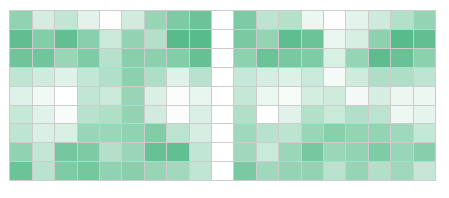}\\
\includegraphics[scale=0.1]{"face1"}
\caption{Left: Original, Right: Approximation, Bottom: 1/5th scale\label{fig:face1}}
\end{figure}

Figure \ref{fig:face1} presents the original data and the approximation based on the matrix factorization for the first image in the data file.  This illustrates that the approximation reduces the amount of information that exists for the picture, however it still seems to be a relatively good approximation.  

\begin{figure}[htp]
\includegraphics[scale=0.5]{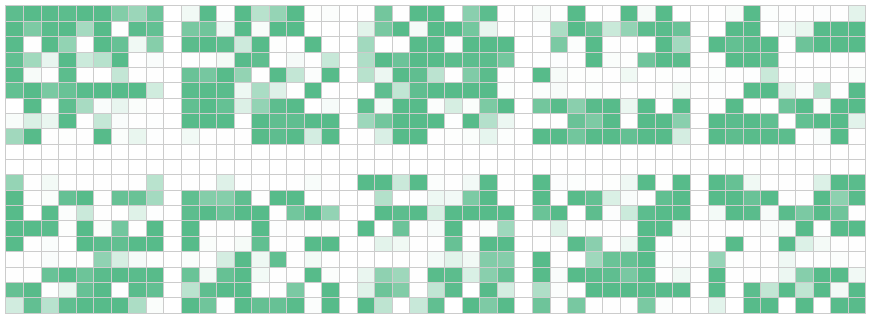}
\caption{Ten basis images\label{fig:face_basis}}
\end{figure}

Figure \ref{fig:face_basis} presents the ten basis images or the $\mathbf{H}$ matrix of the factorization.  In the original paper, the authors estimate 49 basis images and seem to find basis that represents ``parts'' of the face, i.e. the eyes, noes, mouth etc.  Here there is no discernible ``parts,'' but neither does a `` basis'' represent some distorted or averaged ``face.''

The matrix factorization method is able to reduce the number of pixels to be saved to approximately 12\% of the number in the original file ($9 \times 9$ images).  In addition, the files can be retrieved with an error rate of 0.0012.\footnote{Note that this error rate is not improved when the original $9 \times 9$ images are used for retrieval.}  In addition, this method reduces the retrieval time by about half compared to using the full images (22 seconds to retrieve each of the images in the data set, versus 48 seconds.)

As mentioned above, this procedure does not generally provide a unique solution to the matrix factorization problem.  Rather the model is ``set identified''.  That is, depending on the data, the parameters are estimated up to intervals on the real line.  We can get some idea of how large these bounds are by looking at the identified set along the axes.  Here we have 180 points to check, the two end points for each of the 90 axes.  Using this procedure we can determine that most intervals are less than one percentage point wide, but there are 5 of the 90 that are between 1 and 2 percentage points.  No intervals are greater than 2 percentage points.  Intuitively, this could mean an over or underweighting of a basis image by up to 2 percentage points.

\section{Econ PhD Dissertation Abstracts}

The paper uses a data set from PhD dissertation abstracts in economics to illustrate how SMF can be used to estimate a non-parametric topic model.  The abstracts are based on the Proquest dissertation database and cover the years 2002 to 2014.  The abstracts included all abstracts where one of the three subjects of the abstract contained the term ``economics''.\footnote{Only some ten percent of entries in the data base contain information about the home department.  I am grateful to Matthew Chesnes for his help with the data set.}  Table \ref{tab:subjects} presents the distribution of subject terms.  Note that over half the dissertations have no entry for a third term and about a third have the entry for the first term of ``economics, general.''

\begin{table}[htdp]
\begin{center}
\begin{tabular}{|| l |c | c | c ||}
\hline \hline
	&	Term 1	&	Term 2	&	Term 3	\\ \hline
Economics, general	&	8,587	&	1,946	&	647	\\
Economics, finance	&	3,587	&	2,122	&	437	\\
Economics, agricultural	&	1,332	&	589	&	264	\\
Economics, theory	&	1,300	&	1,523	&	505	\\
Economics, labor	&	1,299	&	997	&	343	\\
Business Administration, management	&	601	&	435	&	206	\\
Economics, commerce-business	&	464	&	835	&	372	\\
Other	&	5,529	&	6,753	&	4,360	\\
No Value	&	0	&	7,449	&	15,515	\\
\hline \hline
\end{tabular}
\vspace{.05in}
\caption{Distribution of the values in the three subject fields.  \label{tab:subjects}}
\end{center}
\end{table}

Table \ref{tab:categories} gives a bit more of a sense of the dissertation data.  We see that the old joke about dissertation titles ``three essays in ...'' is based in fact.  UC Berkeley leads all schools in terms of the number of dissertations granted.  Reassuringly, economics departments dominate, although only a small number of the abstracts contain information about the department.  Claremont's Thomas Willett leads all comers in the number of dissertations advised, with Berkeley labor economist, David Card and Minnesota macroeconomist, Tim Keheo right behind.

\begin{table}[htdp]
\begin{center}
\begin{tabular}{|| l |c ||}
\hline \hline
Title	&	Count	\\ \hline
Essays in Financial Economics	&	71	\\
Essays in Applied Microeconomics	&	45	\\
Essays in Corporate Finance	&	40	\\
Essays in International Economics	&	23	\\
Essays in Macroeconomics	&	22	\\
Essays in Labor Economics	&	21	\\ \hline \hline
School	&		\\ \hline
UC Berkeley	&	611	\\
Harvard	&	551	\\
Chicago	&	524	\\
Maryland	&	429	\\
Columbia	&	426	\\
Minnesota	&	412	\\ \hline \hline
Department	&		\\ \hline
Economics	&	2,955	\\
Business  	&	191	\\
Business Administration 	&	177	\\
Finance	&	173	\\
Agricultural Economics	&	123	\\
Other	&	2,801	\\
No Value	&	16,229	\\ \hline \hline
Advisor	&		\\ \hline
Thomas Willett	&	43	\\
David Card	&	37	\\
Tim Kehoe	&	32	\\
Andrei Shleifer	&	31	\\
Gary Becker	&	26	\\
Other	&	18,852	\\
No Value	&	3,628	\\
\hline \hline
\end{tabular}
\vspace{.05in}
\caption{Counts of dissertations for various categories. \label{tab:categories}}
\end{center}
\end{table}

Using this data, the topic model presented in Equation (\ref{eq:estimator}) is estimated assuming there are twenty topics and three hundred and sixty terms in the term-dictionary.  The term-document matrix is constructed by removing ``sparse'' terms, ``stop words," numerals, and stemming the remaining words.  The document count is reduced to include only documents with a positive number of the remaining terms.  The matrix is then normalized to create a distribution over the term-dictionary for each document.  The estimation problem is to factorize a $360 \times 22,649$ matrix into a $360 \times 20$ matrix and a $20 \times 22,649$ matrix.  The procedure presented above is used to estimate the approximately 7,000 parameters that represent the matrix of ``basis'' documents.

\begin{table}[htdp]
\begin{center}
\begin{tabular}{|| c |c |c|c|c|c||}
\hline \hline
Topic	&	1	&	2	&	3	&	4	&	5	\\ \hline
1	&	lead	&	program	&	becaus	&	decad	&	motiv	\\
2	&	often	&	final	&	dissert	&	public	&	problem	\\
3	&	interact	&	state	&	result	&	benefit	&	present	\\
4	&	found	&	observ	&	issu	&	use	&	reduc	\\
5	&	explain	&	key	&	control	&	way	&	even	\\
6	&	key	&	becom	&	indic	&	agricultur	&	impact	\\
7	&	differ	&	analyz	&	need	&	improv	&	credit	\\
8	&	bias	&	investig	&	thesi	&	dynam	&	negat	\\
9	&	need	&	argu	&	exchang	&	network	&	manag	\\
10	&	lead	&	return	&	time	&	investor	&	assess	\\
11	&	requir	&	identifi	&	test	&	small	&	factor	\\
12	&	find	&	incom	&	util	&	stock	&	achiev	\\
13	&	purpos	&	strong	&	requir	&	show	&	investor	\\
14	&	toward	&	higher	&	well	&	onli	&	greater	\\
15	&	labor	&	outcom	&	control	&	network	&	correl	\\
16	&	two	&	human	&	interact	&	distribut	&	affect	\\
17	&	work	&	structur	&	probabl	&	account	&	flow	\\
18	&	low	&	regard	&	base	&	point	&	approach	\\
19	&	toward	&	play	&	bias	&	larger	&	present	\\
20	&	agent	&	behavior	&	structur	&	assess	&	popul	\\
\hline \hline
\end{tabular}
\vspace{.05in}
\caption{Top five terms by frequency distribution conditional on the topic (rows). \label{tab:term_dist}}
\end{center}
\end{table}

Table \ref{tab:term_dist} presents the top five terms by topic, where the topics are the rows of the table.  These term-distributions tend to be highly skewed and, while not reported, the ``anchor word'' condition seems to hold in the data.

\begin{figure}[htdp]
\includegraphics[scale=0.5]{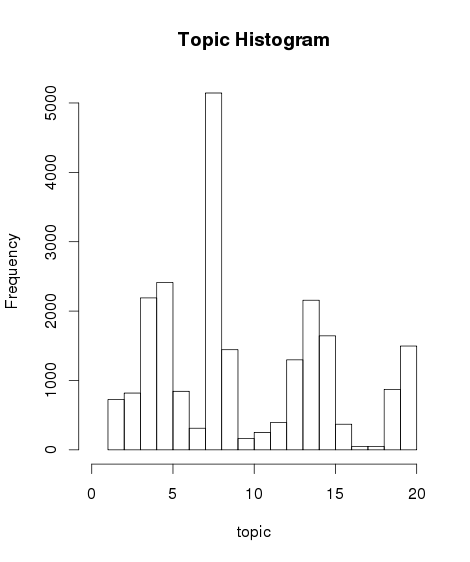}
\caption{Distribution of ``most-probable'' topics across the document corpus. \label{fig:topic_hist}}
\end{figure}

As stated above, the distribution of topics conditional on the document is calculated post-estimation using Bayes' rule, the document's word distribution and the parameter values from the estimation.  Again, it is not reported but this matrix also seems to satisfy the ``anchor document'' condition for a large number of documents in the corpus.  Figure \ref{fig:topic_hist} present a histogram of the distribution of the ``most probable'' topics for each document across the corpus.  This figure shows that there is large amount of variation across topics.

While, in general, this model is ``set-identified'' the bounds on the estimated parameters seem to be very tight.  Using the example presented above we can look at the likelihood ratios to find the intervals along each of the 380 axes representing the identified set.  Almost all intervals are smaller than 1/10 of one percentage point, with a few that are higher, but none is greater than one percentage point.  The necessary and sufficient sparsity conditions presented above seem to hold in the data set analyzed here.  Note that the paper does not estimate or account for sampling variance.

\subsection{Prediction Problem}

The problem analyzed here is to predict the hand entered subject term.  To analyze the value of the topic model this subsection uses the results of the topic model above to estimate a multinomial logit model of the dissertation's first subject term.  In the analysis the subject term is assumed to be either ``economics, finance,'' ``economics, labor,'' ``economics, theory,'' ``economics, agricultural'' and an ``other category.''  The results of the topic model are included by taking the ``most probable'' topic for each document and including that as a factor in the model.  The model also includes continuous measures of time (year) and the number of pages that the dissertation has.

\begin{table}[htdp]
\begin{center}
\begin{tabular}{||l | c |c ||c|c||c|c||c|c||}
\hline \hline
 & \multicolumn{2}{c ||}{Finance} & \multicolumn{2}{c ||}{Labor}
& \multicolumn{2}{c ||}{Theory} & \multicolumn{2}{c ||}{Ag}\\ \hline \hline
             &  Coef  & SE        &  Coef   & SE 	    &  Coef    & SE      &  Coef & SE \\ \hline
Constant	&	3.78	&	0.00	&	1.07	&	0.00	&	5.62	&	0.00	&	7.69	&	0.00	\\
Topic 2	&	0.08	&	0.18	&	1.03	&	0.28	&	-0.51	&	0.32	&	-0.34	&	0.09	\\
Topic 3	&	-0.47	&	0.18	&	0.59	&	0.27	&	-0.29	&	0.23	&	0.12	&	0.25	\\
Topic 4	&	-0.06	&	0.13	&	0.72	&	0.21	&	-0.67	&	0.23	&	0.01	&	0.22	\\
Topic 5	&	-0.29	&	0.08	&	0.78	&	0.12	&	-0.62	&	0.13	&	0.07	&	0.12	\\
Topic 6	&	-0.10	&	0.06	&	0.65	&	0.10	&	-0.59	&	0.10	&	0.11	&	0.10	\\
Topic 7	&	-0.18	&	0.07	&	1.14	&	0.10	&	-0.56	&	0.11	&	0.10	&	0.11	\\
Topic 8	&	-0.73	&	0.16	&	0.78	&	0.21	&	-0.31	&	0.19	&	-0.17	&	0.23	\\
Topic 9	&	-1.37	&	0.00	&	0.94	&	0.01	&	-1.01	&	0.00	&	-0.44	&	0.00	\\
Topic 10	&	-0.44	&	0.01	&	0.32	&	0.00	&	-0.21	&	0.01	&	-7.06	&	0.00	\\
Topic 11	&	-0.27	&	0.09	&	0.46	&	0.16	&	-0.61	&	0.15	&	-0.05	&	0.15	\\
Topic 12	&	-0.16	&	0.10	&	0.30	&	0.20	&	-0.34	&	0.15	&	-0.20	&	0.18	\\
Topic 13	&	-0.25	&	0.07	&	0.63	&	0.12	&	-0.44	&	0.11	&	-0.13	&	0.12	\\
Topic 14	&	-0.40	&	0.10	&	0.70	&	0.15	&	-0.45	&	0.14	&	-0.06	&	0.15	\\
Topic 15	&	-0.49	&	0.07	&	0.73	&	0.10	&	-0.64	&	0.10	&	0.21	&	0.09	\\
Topic 16	&	-0.55	&	0.06	&	0.65	&	0.09	&	-0.50	&	0.09	&	0.20	&	0.09	\\
Topic 17	&	-0.53	&	0.10	&	0.61	&	0.15	&	-0.83	&	0.16	&	-0.02	&	0.15	\\
Topic 18	&	-0.01	&	0.14	&	0.31	&	0.28	&	-0.31	&	0.22	&	-0.45	&	0.29	\\
Topic 19	&	-0.32	&	0.05	&	0.65	&	0.07	&	-0.55	&	0.07	&	-0.01	&	0.07	\\
Topic 20	&	-0.27	&	0.07	&	0.66	&	0.12	&	-0.57	&	0.12	&	0.24	&	0.11	\\
Degree Year	&	Yes	&		&	Yes	&		&	Yes	&		&	Yes	&		\\
Pages	&	Yes	&		&	Yes	&		&	Yes	&		&	Yes	&		\\ \hline
RMSE & \multicolumn{2}{c||}{1.2917}  & & & & & & \\
\hline \hline
\end{tabular}
\vspace{.05in}
\caption{Multinomial regression results.  Note that the topic numbers do not correspond to the numbering used in Table \ref{tab:term_dist} (unfortunately). \label{tab:mn_res}}
\end{center}
\end{table}

Table \ref{tab:mn_res} presents the results from the multinomial logit.  For each choice, the left-hand column presents the coefficient estimate and the right-hand columns gives the standard error.   The last two rows indicate that continuous variables for year of publication and the number of pages were included in the regression model.  The results suggest that Topic 7 is an important indicator of a labor dissertation, while Topic 10 dissertations are very unlikely to be Ag.

As a measure of fit, the paper uses the root mean squared error.  After the model is estimated, the fitted values are calculated for the data and the squared difference between the observed subject and the predicted subject.  The root of the average squared difference is then calculated.  The RSME for the model using the results from above are 1.2917.  This compares to 1.2927 for a more standard topic model using LDA to calculate the most probable topics.

\section{Conclusion}

This paper considers a ``natural'' restriction on the matrix factorization problem where both matrix factors are assumed to be non-negative and at least one of the matrix factors is assumed to be stochastic (its rows or columns add to 1).  This stochastic matrix factorization is a restriction on the non-negative matrix factorization model \citep{Lee:1999}.  However, for many applications of non-negative matrix factorization, the restriction may be of no importance.  In particular, the two problems analyzed in \cite{Lee:1999} can be modeled as stochastic matrix factorization problems.  The paper shows that stochastic matrix factorization problems are unique when the data satistifies the conditions posited in \cite{Huang:2013}.  These conditions are necessary and sufficient for uniqueness of the factorization.  The paper illustrates the model using the ``face'' date analyzed by \cite{Lee:1999} and a topic model analysis of dissertation abstracts in economics.

\bibliographystyle{plainnat}
\bibliography{ei}

\end{document}